\documentclass[conference]{IEEEtran}
\IEEEoverridecommandlockouts
\usepackage{cite}
\usepackage{graphicx}
\usepackage{epstopdf}
\usepackage{amsmath,amssymb,amsfonts}
\usepackage{algorithmic}
\usepackage{textcomp}
\usepackage{xcolor}
\usepackage[ruled,lined,linesnumbered]{algorithm2e}
\usepackage{makecell}
\usepackage{extarrows}
\usepackage{multicol}
\usepackage{subcaption}
\usepackage{multirow}
\usepackage{bm}
\usepackage{colortbl}
\usepackage{amsthm}
\usepackage{mathrsfs}
\usepackage{booktabs}
\usepackage{siunitx}
\usepackage{tabularx,booktabs}
\usepackage[flushleft]{threeparttable}
\usepackage[short]{optidef}

\usepackage{xcolor}
\usepackage{xparse}
\usepackage[normalem]{ulem}

\newif\ifshownotes
\shownotesfalse     % final

\newcommand{\defineauthor}[3]{%
  \expandafter\gdef\csname author@#1@name\endcsname{#2}%
  \expandafter\gdef\csname author@#1@color\endcsname{#3}%
}

\NewDocumentCommand{\note}{m m}{%
  \ifshownotes
    \textcolor{\csname author@#1@color\endcsname}
    {\textbf{[\csname author@#1@name\endcsname: #2]}}
  \fi
}

\NewDocumentCommand{\add}{m m}{%
  \ifshownotes
    \textcolor{\csname author@#1@color\endcsname}
    {\textbf{#2}}
  \else
    #2
  \fi
}

\NewDocumentCommand{\del}{m m}{%
  \ifshownotes
    \textcolor{\csname author@#1@color\endcsname}
    {\sout{#2}}
  \fi
}

\NewDocumentCommand{\rep}{m m m}{%
  \ifshownotes
    {\textcolor{\csname author@#1@color\endcsname}{\textbf{#3}}}%
    {\textcolor{\csname author@#1@color\endcsname}{\;{\scriptsize(was: #2)}}}%
  \else
    #3%
  \fi
}
\defineauthor{GLM5}{GLM-5}{purple}
\defineauthor{SK}{Codex}{blue}
\defineauthor{SP}{Supervisor}{red}

\newtheorem{remark}{Remark}

\newtheorem{proposition}{Proposition}

\definecolor{proposedblue}{RGB}{242,242,255}     % light
\definecolor{proposedblueB}{RGB}{255, 220, 225}   % slightly darker

\newcommand{\proposedrow}{\rowcolor{proposedblue}}
\newcommand{\proposedrowB}{\rowcolor{proposedblueB}}
\DeclareMathOperator*{\di}{\mathrm{d}\!}

\def\BibTeX{{\rm B\kern-.05em{\sc i\kern-.025em b}\kern-.08em
T\kern-.1667em\lower.7ex\hbox{E}\kern-.125emX}}

\begin{document}

\title{Generative Semantic Communication via Alternating Dual-Domain Posterior Sampling}

\author{
% \thanks{This work is partly supported by the National Key R\&D Program of China under Grant No. 2024YFE0200802, partly by NSFC under grant No.62293481 and No.62201505, and partly supported by the China Scholarship Council (No. 202406320381) and the CAST Young Talent Support Program for Doctoral Students.}
\IEEEauthorblockN{
Shunpu Tang and 
Qianqian Yang\IEEEauthorrefmark{1},
    }
\IEEEauthorblockA{ 
College of Information Science and Electronic Engineering, Zhejiang University, Hangzhou, China \\
Email: \{tangshunpu,  qianqianyang20\}@zju.edu.cn
}
}

\maketitle

\thispagestyle{empty}
\pagestyle{empty}
\begin{abstract}
  Generative semantic communication (SemCom) harnesses pretrained generative priors to improve the perceptual quality of wireless image transmission. Existing generative SemCom receivers, however, rely on maximum a posteriori (MAP) estimation, which fundamentally cannot preserve the data distribution and thus limits achievable perceptual quality. Moreover, current diffusion-based approaches using single-domain guidance face significant limitations: latent-domain guidance is sensitive to channel noise, while image-domain guidance inherits decoder bias. Simply combining both domains simultaneously yields an \textcolor{black}{overconfident} pseudo-posterior. In this paper, we formulate semantic decoding as a Bayesian inverse problem and prove that posterior sampling achieves optimal perceptual quality by preserving the data distribution. Building on this insight, we propose alternating dual-domain posterior sampling (ADDPS), a diffusion-based SemCom receiver that alternately enforces latent-domain and image-domain consistency during the sampling process. This alternating strategy decomposes joint posterior sampling into simpler subproblems, avoiding gradient conflicts while retaining the complementary strengths of both domains. Experiments on FFHQ demonstrate that the proposed ADDPS achieves superior perceptual quality \textcolor{black}{compared with} existing methods. 
\end{abstract}

\begin{IEEEkeywords}
  Semantic communication, diffusion models, posterior sampling, joint source-channel coding, inverse problems.
\end{IEEEkeywords}

\section{Introduction}
Recently, semantic communication (SemCom) has emerged as a transformative alternative to the classical bit-oriented communication paradigm \cite{Semantic1,Semantic2}, aiming to extract and transmit only task-relevant semantic information. The seminal work of deep joint source-channel coding (DeepJSCC) \cite{DeepJSCC} established an end-to-end SemCom framework for wireless image transmission, which has since been extended with advanced architectures \cite{SWINJSCC}, training strategies \cite{Shunpu_SemCom_TCom}, and SNR-adaptive designs \cite{dingSNRAdaptiveDeepJoint2021}.
Despite these advances, existing SemCom systems optimizing end-to-end distortion metrics often produce blurry, over-smoothed reconstructions with poor perceptual quality \cite{blau2018perception,blau2019rethinking}. \textcolor{black}{Fortunately}, this issue can be mitigated by \textit{generative SemCom} \cite{liang2025generative}, which leverages powerful generative priors to reduce the distributional mismatch between reconstructions and the original data. In this direction, early work \cite{GAN_JSCC,tang2024evolving} introduced pretrained GANs at the receiver to reconstruct high-fidelity images via latent inversion. More recently, \emph{generative diffusion models (GDMs)} \cite{ho2020denoising,song2021scorebased} have attracted significant attention due to their superior generation quality and flexible conditional generation, thereby enabling \emph{diffusion-based SemCom} \cite{CDDM_SemCom,yilmaz2023high,tang2025enabling}. Among these, \emph{HiFi-DiffCom} \cite{DiffCom} treats the channel-received signal as a condition for diffusion posterior sampling \cite{chung2022diffusion} and achieves state-of-the-art perceptual quality by further incorporating the decoder output as an auxiliary condition.

However, these generative SemCom methods still suffer significant performance degradation under severe bandwidth compression and very low SNR conditions. Therefore, in this paper, we revisit semantic decoding from a Bayesian perspective. Specifically, most existing approaches \cite{GAN_JSCC,tang2024evolving} formulate receiver-side reconstruction as a maximum a posteriori (MAP) problem. However, MAP is a \emph{point estimator} that collapses the full posterior into a single solution, which cannot preserve the data distribution and limits achievable perceptual quality. This motivates adopting posterior sampling, as adopted in \cite{DiffCom}, which draws from the full posterior and can theoretically achieve optimal perceptual quality. We further identify that existing guidance strategies in diffusion-based SemCom are suboptimal: \emph{latent-domain (Z-domain)} guidance is vulnerable to channel noise at low SNRs, while \emph{image-domain (X-domain)} guidance may discard semantic evidence in the received signal. Although \cite{DiffCom} combines both terms at every step, such a strategy may blur the conditional independence structure and thus tends to produce an overconfident pseudo-posterior.

Motivated by this, we propose \emph{alternating dual-domain posterior sampling (ADDPS)}, which alternates between Z-domain and X-domain guidance during the diffusion sampling process. The main contributions are summarized as follows: (i) we formulate generative semantic decoding as a Bayesian inverse problem and prove that MAP cannot preserve the data distribution, whereas posterior sampling is distributionally optimal; (ii) we propose an alternating dual-domain guidance scheme that exploits the complementary robustness of Z- and X-domain consistency while avoiding the \textcolor{black}{overconfident} pseudo-posterior caused by simultaneous guidance; and (iii) \textcolor{black}{experiments} on FFHQ demonstrate that, under the highly challenging setting of a bandwidth compression ratio of 1/192 and $\mathrm{SNR}=1 \mathrm{~dB}$, the proposed ADDPS achieves superior perceptual quality over existing methods, attaining an FID of 56.94. Ablation studies further validate the effectiveness of the proposed alternating design.
\section{Preliminaries}

\subsection{SemCom for Perceptual Image Transmission}
We consider a SemCom system for wireless image transmission. The system comprises an NN-based semantic encoder at the transmitter and an NN-based semantic decoder at the receiver. Let the source image be $\bm{x} \in \mathbb{R}^{3 \times N_{\text{H}} \times N_{\text{W}}}$, where the first dimension corresponds to the RGB color channels, and $N_{\text{H}}$ and $N_{\text{W}}$ denote the image height and width, respectively. The compression efficiency is characterized by BCR, defined as $\text{BCR} = \frac{k}{N}$, where $k$ is the dimension of the complex-valued channel input signal and $N = 3 \times N_{\text{H}} \times N_{\text{W}}$ is the source bandwidth. At the transmitter, the semantic encoder extracts \textcolor{black}{semantic information} from the source image $\bm{x}$ and maps it to a $k$-dimensional complex-valued channel input signal $\bm{z} \in \mathbb{C}^k$, which can be expressed as
\begin{equation}
  \bm{z} = \mathcal{E}_\theta(\bm{x}),
\end{equation}
where $\mathcal{E}_\theta(\cdot)$ denotes the semantic encoder parameterized by $\theta$. To satisfy the transmit power constraint $P$, a power normalization layer is applied to normalize the transmitted signal $\bm{z}$. For simplicity, we absorb the power normalization operation into the encoder mapping and still use $\bm{z}$ to denote the channel input signal after normalization.

The normalized signal $\bm{z}$ is transmitted over an additive white Gaussian noise (AWGN) channel. The received signal is given by
\begin{equation}
  \hat{\bm{z}} = \bm{z} + \bm{n}_{\text{ch}},
\end{equation}
where $\bm{n}_{\text{ch}} \sim \mathcal{CN}(0, \sigma_{\text{ch}}^2 \bm{I})$ is the channel noise with zero mean and variance $\sigma_{\text{ch}}^2$. The signal-to-noise ratio (SNR) can hence be defined as $\text{SNR} = \frac{P}{\sigma_{\text{ch}}^2}$. After receiving the corrupted signal $\hat{\bm{z}}$, the receiver employs the semantic decoder to reconstruct the source image, given by
\begin{equation}
  \tilde{\bm{x}} = \mathcal{D}_{\phi}(\hat{\bm{z}}),
\end{equation}
where $\mathcal{D}_\phi(\cdot)$ denotes the semantic decoder \textcolor{black}{parameterized by} $\phi$ and $\tilde{\bm{x}}$ is the reconstructed image. We note that recent advances have incorporated generative models to further enhance the perceptual quality of the reconstruction. In such cases, the decoder output $\tilde{\bm{x}}$ or the received signal $\hat{\bm{z}}$ serves as a conditioning signal, yielding the final reconstruction:
\begin{equation}
  \hat{\bm{x}} \sim \mathcal{G}_\psi(\tilde{\bm{x}}, \hat{\bm{z}}),
\end{equation}
where $\mathcal{G}_\psi(\cdot)$ denotes the generative model parameterized by $\psi$ and $\hat{\bm{x}}$ is the final reconstructed image.

To evaluate reconstruction quality, distortion metrics such as PSNR and MS-SSIM are commonly employed. Following \cite{blau2018perception,blau2019rethinking}, we also consider perceptual quality, which is quantified by the discrepancy between the reconstruction distribution $p_{\hat{\bm{x}}}$ and the original data distribution $p_{\mathrm{data}}$, and can be practically measured by the Fr\'echet Inception distance (FID).

\subsection{Generative Diffusion Models}
GDMs provide a powerful framework for learning complex data distributions and have achieved remarkable success in image synthesis. A GDM consists of a forward noising process that gradually corrupts data into noise, and a reverse denoising process that learns to recover data from noise. Formally, the forward process for data $\bm{x}_0 \sim p_{\text{data}}$ is described by the stochastic differential equation (SDE)
\begin{equation}
  \label{eq:forward_sde}
  \di \bm{x}_t = -\frac{1}{2}\beta(t)\bm{x}_t\di t + \sqrt{\beta(t)}\di \bm{w}_t,
\end{equation}
where $t \in [0, 1]$, $\beta(t)$ is a noise schedule, and $\bm{w}_t$ is a standard Wiener process. The corresponding reverse process, which starts from a Gaussian prior and generates samples of $\bm{x}_0$, follows the reverse-time SDE
\begin{equation}
  \label{eq:reverse_sde}
  \di \bm{x}_t = \left(-\frac{1}{2}\beta(t)\bm{x}_t - \beta(t)\nabla_{\bm{x}_t} \log p_t(\bm{x}_t)\right)\di t + \sqrt{\beta(t)}\di \bar{\bm{w}}_t,
\end{equation}
where $\bar{\bm{w}}_t$ is a reverse-time Wiener process. Solving this SDE requires the score function $\nabla_{\bm{x}_t}\log p_t(\bm{x}_t)$, which is approximated by a neural network $\bm{s}_\omega(\bm{x}_t,t)$ trained via denoising score matching~\cite{Vincent2011ConnectionScoreMatching} with the objective
\begin{equation}
  \omega^* = \arg\min_\omega \mathbb{E}_{t, \bm{x}_t, \bm{x}_0}\left[\|\bm{s}_\omega(\bm{x}_t, t) - \nabla_{\bm{x}_t}\log p(\bm{x}_t|\bm{x}_0)\|_2^2\right].
\end{equation}
Throughout this paper, we adopt the variance-preserving (VP) SDE formulation \cite{song2021scorebased}, which is equivalent to the denoising diffusion probabilistic model (DDPM) \cite{ho2020denoising} in discrete time. We define discrete-time notations with $T$ steps: $\bm{x}_i \triangleq \bm{x}(i/T)$, $\alpha_i \triangleq 1 - \beta_i$, and $\bar{\alpha}_i \triangleq \prod_{j=1}^{i} \alpha_j$. Under this notation, the forward process becomes
\begin{equation}
  \bm{x}_i = \sqrt{\bar{\alpha}_i}\bm{x}_0 + \sqrt{1-\bar{\alpha}_i}\bm{\epsilon}, \quad \bm{\epsilon} \sim \mathcal{N}(0, \bm{I}),
\end{equation}
and the reverse process is solved iteratively as
\begin{equation}
  \bm{x}_{i-1} = \frac{1}{\sqrt{\alpha_i}}\left(\bm{x}_i + \beta_i\,\bm{s}_\omega(\bm{x}_i, i)\right) + \sqrt{\beta_i}\,\bm{\epsilon}.
\end{equation}

% We note that the generative process can be conditioned on auxiliary information $\bm{y}$ by sampling from $p(\bm{x}|\bm{y})$. Following Bayes' rule, the conditional score decomposes as
% \begin{equation}
%   \nabla_{\bm{x}_t} \log p_t(\bm{x}_t|\bm{y}) = \nabla_{\bm{x}_t} \log p_t(\bm{x}_t) + \nabla_{\bm{x}_t} \log p_t(\bm{y}|\bm{x}_t),
% \label{eq:conditional_score}
% \end{equation}
% where the first term is the prior score and the second term is the likelihood score.
% However, for solving inverse problems,  while the prior score $\nabla{\bm{x}_t} \log p_t(\bm{x}_t)$ can be approximated by the score network, the likelihood term $\nabla{\bm{x}_t} \log p_t(\bm{y}|\bm{x}_t)$ is generally intractable due to the absence of an explicit dependency between $\bm{y}$ and the noisy intermediate $\bm{x}_t$. Diffusion posterior sampling (DPS) \cite{chung2022diffusion} addresses this challenge by approximating the likelihood via Tweedie's formula: $p(\bm{y}|\bm{x}_t) \approx p(\bm{y}|\hat{\bm{x}}_0)$, where $\hat{\bm{x}}_0$ is a denoised estimate of $\bm{x}_t$, thereby enabling practical conditional generation.
\begin{figure*}
  \centering
  \vspace{0.08in}
  \includegraphics[width=0.68\textwidth]{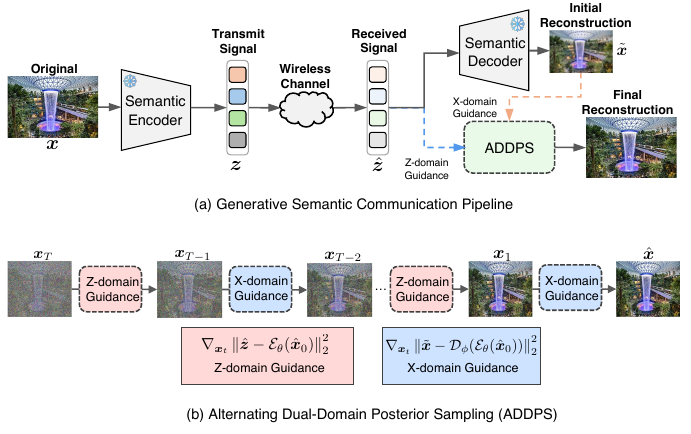}
  \caption{Illustration of the proposed ADDPS method, where the reverse diffusion process alternates between \textcolor{black}{Z-domain and X-domain guidance to enforce consistency in both domains while avoiding noise amplification in Z-domain guidance.}}
  \label{fig:addps}
\end{figure*}
\section{Proposed Method}
In this section, we present the proposed diffusion-based SemCom framework. We first formulate the semantic decoding process as an inverse problem from a Bayesian perspective and establish dual-domain measurement models. We then introduce the proposed ADDPS method.
\subsection{Generative Semantic Decoding as \textcolor{black}{an} Inverse Problem}
In generative SemCom systems, the transmission process can be viewed as a degradation process comprising semantic compression and channel corruption. Reconstructing the source image $\bm{x}$ from the noisy received signal $\hat{\bm{z}}$ is therefore an \emph{inverse problem}. A natural Bayesian approach is to find the most probable reconstruction given the observation, i.e., \textcolor{black}{the} MAP estimate
\begin{equation}
\hat{\bm{x}}_{\mathrm{MAP}} = \arg \max_{\bm{x}}\; p(\bm{x} | \hat{\bm{z}}).
\end{equation}
By Bayes' rule, the posterior distribution can be written as
\begin{equation}
p(\bm{x}| \hat{\bm{z}}) \propto p(\hat{\bm{z}}| \bm{x})\, p_{G}(\bm{x}),
\label{eq:posterior_factorization}
\end{equation}
where $p_{G}(\bm{x})$ serves as the prior and $p(\hat{\bm{z}}| \bm{x})$ is the likelihood induced by the semantic encoder and the wireless channel. For a deterministic semantic encoder and an AWGN channel with noise variance $\sigma_{\text{ch}}^2$, the likelihood can be written as
\begin{equation}
p(\hat{\bm{z}} | \bm{x}) \propto 
\exp\!\left( -\frac{\| \hat{\bm{z}} - \mathcal{E}_\theta(\bm{x}) \|_2^2}{2\sigma_{\mathrm{ch}}^2} \right),
\label{eq:likelihood}
\end{equation}
which penalizes the mismatch between $\hat{\bm{z}}$ and $\mathcal{E}_\theta(\bm{x})$, thereby enforcing semantic consistency \cite{Shunpu_SemCom_TCom}. Taking the negative logarithm of \eqref{eq:posterior_factorization} and dropping constants, the MAP problem becomes
\begin{equation}
\hat{\bm{x}}_{\mathrm{MAP}}
=\arg\min_{\bm{x}} \;\|\hat{\bm{z}}-\mathcal{E}_\theta(\bm{x})\|_2^2 - \log p_{G}(\bm{x}),
\label{eq:map_decoder}
\end{equation}
where the first term is the semantic consistency loss and the second term is the prior regularizer. This MAP formulation is adopted, either explicitly or implicitly, by existing generative SemCom methods such as \cite{GAN_JSCC,tang2024evolving}. However, the MAP estimate is inherently a \emph{point estimator} that collapses the full posterior into a single mode, which leads to a mismatch between the reconstruction distribution and the original data distribution, thereby limiting the achievable perceptual quality.
\begin{proposition}
\label{prop:map_mismatch}
Solving the MAP problem \eqref{eq:map_decoder} does not, in general, preserve the data distribution, i.e.,
$p_{\hat{\bm{x}}}(\bm{x}) \neq p_{\mathrm{data}}(\bm{x})$.
\end{proposition}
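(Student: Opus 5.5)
The argument is by structure plus a counterexample. The MAP decoder is a deterministic map $\hat{\bm{x}}_{\mathrm{MAP}} = g(\hat{\bm{z}})$, so the reconstruction distribution is the pushforward $p_{\hat{\bm{x}}} = g_{\#} p_{\hat{\bm{z}}}$, where $p_{\hat{\bm{z}}}$ is the law of $\hat{\bm{z}}$ induced by $\bm{x}\sim p_{\mathrm{data}}$, the encoder $\mathcal{E}_\theta$ and the AWGN channel. I take the prior in \eqref{eq:posterior_factorization} to match the data, $p_G = p_{\mathrm{data}}$. This is the most favorable case for MAP, so a mismatch there proves the claim. Since the statement says ``in general'', it suffices to exhibit one admissible configuration with $p_{\hat{\bm{x}}} \neq p_{\mathrm{data}}$. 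I will give a Gaussian example and then indicate why the phenomenon is generic.

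For the concrete instance, take a scalar real-valued toy version of the system. Let $x\sim\mathcal{N}(0,\sigma_x^2)$, let $\mathcal{E}_\theta(x)=x$ (after power normalization, $P=\sigma_x^2$), and let $\hat z = x + n$ with $n\sim\mathcal{N}(0,\sigma_{\mathrm{ch}}^2)$, $\sigma_{\mathrm{ch}}>0$. The posterior $p(x\mid\hat z)$ is Gaussian with mean $\frac{\sigma_x^2}{\sigma_x^2+\sigma_{\mathrm{ch}}^2}\hat z$. Minimizing \eqref{eq:map_decoder} therefore gives the linear shrinkage $\hat x_{\mathrm{MAP}} = \rho\,\hat z$ with $\rho=\sigma_x^2/(\sigma_x^2+\sigma_{\mathrm{ch}}^2)$. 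Then $\hat x_{\mathrm{MAP}}\sim\mathcal{N}(0,\rho^2(\sigma_x^2+\sigma_{\mathrm{ch}}^2)) = \mathcal{N}\!\left(0,\frac{\sigma_x^4}{\sigma_x^2+\sigma_{\mathrm{ch}}^2}\right)$. Its variance is strictly smaller than $\sigma_x^2$ whenever $\sigma_{\mathrm{ch}}^2>0$, so $p_{\hat{\bm{x}}}\neq p_{\mathrm{data}}$. The same computation extends verbatim to the vector and complex case with a linear encoder. If one prefers to keep the compression $k<N$, a linear encoder with a nontrivial null space makes the collapse even starker: MAP places zero variance on the unobserved directions.

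To show the example is not an artifact, I will add a general remark. For a generic nondegenerate channel the true posterior $p(\bm{x}\mid\hat{\bm{z}})$ has positive spread, so sampling from it yields $\int p(\bm{x}\mid\hat{\bm{z}})p(\hat{\bm{z}})\,\mathrm{d}\hat{\bm{z}} = p_{\mathrm{data}}(\bm{x})$. The MAP decoder replaces each posterior by a Dirac mass at its mode. Matching the data distribution would then require the modes $g(\hat{\bm{z}})$, as $\hat{\bm{z}}$ varies, to reproduce exactly the mixture of non-degenerate posteriors. This fails, for example, whenever $\mathrm{Cov}(\hat{\bm{x}}_{\mathrm{MAP}})\prec\mathrm{Cov}(\bm{x})$, as happens in the example above. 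The law of total variance gives this contraction for any estimator close to the posterior mean: $\mathrm{Cov}(\bm{x}) = \mathbb{E}[\mathrm{Cov}(\bm{x}\mid\hat{\bm{z}})] + \mathrm{Cov}(\mathbb{E}[\bm{x}\mid\hat{\bm{z}}])$.

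The main obstacle is making this general remark rigorous for arbitrary nonlinear encoders and non-Gaussian priors. There the mode and the mean differ and no clean covariance inequality is available. I will not try to prove the fully general statement. The proposition only asserts failure ``in general'', so the explicit Gaussian counterexample is the formal proof, and the variance decomposition serves as supporting intuition.
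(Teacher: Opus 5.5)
Your proposal is correct and follows the same route as the paper. Both use a scalar Gaussian source with identity encoder and AWGN, where the MAP estimate is the shrinkage $\sigma_x^2\hat{z}/(\sigma_x^2+\sigma_{\mathrm{ch}}^2)$, whose variance $\sigma_x^4/(\sigma_x^2+\sigma_{\mathrm{ch}}^2)$ is strictly below $\sigma_x^2$; your pushforward and law-of-total-variance remarks are extra intuition and are not needed for the counterexample.
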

\begin{proof}[Proof sketch]
Consider a scalar Gaussian source $x\sim\mathcal{N}(0,\sigma_x^2)$ with identity encoder and AWGN channel noise $\sigma_n^2>0$. By Bayes' rule, $\hat{x}_{\mathrm{MAP}}=\alpha \hat{z}$ with $\alpha=\sigma_x^2/(\sigma_x^2+\sigma_n^2)<1$, yielding $\mathrm{Var}(\hat{x}_{\mathrm{MAP}})=\sigma_x^4/(\sigma_x^2+\sigma_n^2) < \sigma_x^2$. Hence the MAP output distribution is strictly narrower than the source. 
\end{proof}
\begin{remark}
  \textcolor{black}{The scalar case shows} that this mismatch is caused by channel noise. When $\sigma_n^2 = 0$, we have $\hat{z} = x$, so the received signal follows the same distribution as the source. When $\sigma_n^2 > 0$, however, the receiver only observes a noisy version $\hat{z}$. The MAP estimate is a deterministic function of $\hat{z}$, so it cannot reproduce the full randomness of the source distribution, which limits perceptual quality~\cite{blau2018perception,blau2019rethinking}. 
\end{remark}
% sub figures *3

This distribution mismatch motivates us to reconsider the decoding paradigm. Instead of seeking point estimates that minimize reconstruction error, we turn to \emph{posterior sampling}, which draws samples from $p(\bm{x} | \hat{\bm{z}})$ to better capture the uncertainty in the source.
\begin{proposition}
\label{prop:perceptual_optimality}
\textcolor{black}{At the receiver}, if we can sample from the posterior $p(\bm{x} | \hat{\bm{z}})$, then the marginal distribution matches the data distribution $p_{\hat{\bm{x}}}(\bm{x}) = p_{\mathrm{data}}(\bm{x})$ when the generative model captures the data distribution exactly.
\end{proposition}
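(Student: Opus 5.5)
The argument is a marginalization identity: averaging the posterior over the distribution of the received signal returns the prior, and the prior equals $p_{\mathrm{data}}$ when the generative model is exact. I will make the stochastic decoding explicit and then carry out the marginalization.

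\emph{Setup.} The end-to-end system defines a joint distribution on $(\bm{x},\hat{\bm{z}})$. The source $\bm{x}$ is drawn from $p_{\mathrm{data}}$. The received signal is drawn from the channel kernel $p(\hat{\bm{z}}|\bm{x})$ in \eqref{eq:likelihood}, which is induced by the deterministic encoder $\mathcal{E}_\theta$ and the AWGN channel. The receiver then outputs $\hat{\bm{x}}\sim q(\cdot|\hat{\bm{z}})$, where $q$ is its sampling kernel. The reconstruction marginal is therefore
\begin{equation*}
p_{\hat{\bm{x}}}(\bm{x}')=\int q(\bm{x}'|\hat{\bm{z}})\,p(\hat{\bm{z}})\,\mathrm{d}\hat{\bm{z}},
\qquad
p(\hat{\bm{z}})=\int p(\hat{\bm{z}}|\bm{x})\,p_{\mathrm{data}}(\bm{x})\,\mathrm{d}\bm{x}.
\end{equation*}

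\emph{Using the hypotheses.} Exactness of the generative prior means $p_G=p_{\mathrm{data}}$. Exact posterior sampling means $q(\cdot|\hat{\bm{z}})=p(\cdot|\hat{\bm{z}})$, the Bayes posterior in \eqref{eq:posterior_factorization} with this prior. Since the prior equals the true source law, this posterior is the true conditional of $\bm{x}$ given $\hat{\bm{z}}$ under the joint distribution above. Explicitly,
\begin{equation*}
p(\bm{x}'|\hat{\bm{z}})=\frac{p(\hat{\bm{z}}|\bm{x}')\,p_{\mathrm{data}}(\bm{x}')}{p(\hat{\bm{z}})}.
\end{equation*}

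\emph{Marginalization.} Substituting this into $p_{\hat{\bm{x}}}$, the factor $p(\hat{\bm{z}})$ cancels, leaving
\begin{equation*}
p_{\hat{\bm{x}}}(\bm{x}')=p_{\mathrm{data}}(\bm{x}')\int p(\hat{\bm{z}}|\bm{x}')\,\mathrm{d}\hat{\bm{z}}.
\end{equation*}
For each fixed $\bm{x}'$, the channel likelihood is a normalized Gaussian density in $\hat{\bm{z}}$, so the integral equals one. Hence $p_{\hat{\bm{x}}}=p_{\mathrm{data}}$. Equivalently, $(\hat{\bm{x}},\hat{\bm{z}})$ has the same joint law as $(\bm{x},\hat{\bm{z}})$, so its $\bm{x}$-marginal coincides with that of the source.

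As a sanity check, I would revisit the scalar example from the proof of Proposition~\ref{prop:map_mismatch}. There the posterior is $\mathcal{N}(\alpha\hat z,\alpha\sigma_n^2)$. Sampling from it gives variance
\begin{equation*}
\alpha^2(\sigma_x^2+\sigma_n^2)+\alpha\sigma_n^2=\alpha(\sigma_x^2+\sigma_n^2)=\sigma_x^2,
\end{equation*}
so the posterior sample recovers exactly the variance that MAP estimation lost.

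\emph{Main obstacle.} The delicate step is justifying the Fubini swap and the cancellation of $p(\hat{\bm{z}})$. The swap is legitimate because all integrands are nonnegative. The cancellation requires $p(\hat{\bm{z}})>0$ wherever the posterior is defined; this holds for Gaussian channel noise, since the likelihood is strictly positive everywhere. The remaining subtlety is that the claim needs \emph{both} idealizations. The prior must be exact so that Bayes' rule returns the true conditional, and the sampler must be exact. I would state both explicitly as assumptions in the proof.
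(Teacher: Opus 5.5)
Your proof is correct and follows the same route as the paper: integrate the posterior against $p(\hat{\bm{z}})$, use Bayes' rule to cancel $p(\hat{\bm{z}})$, and conclude with $\int p(\hat{\bm{z}}|\bm{x})\,\mathrm{d}\hat{\bm{z}}=1$. Your additions are sound refinements of the paper's one-line argument: stating $p_G=p_{\mathrm{data}}$ and exact sampling as explicit assumptions, the positivity remark, and the scalar variance check.
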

\begin{proof}
By the law of total probability and Bayes' rule, we have
$p_{\hat{\bm{x}}}(\bm{x}) = \int p(\bm{x} |\hat{\bm{z}}) p(\hat{\bm{z}})\mathrm{d}\hat{\bm{z}} = p_{\mathrm{data}}(\bm{x}) \int p(\hat{\bm{z}} |\bm{x})\mathrm{d}\hat{\bm{z}} = p_{\mathrm{data}}(\bm{x})$.
\end{proof}

Propositions~\ref{prop:map_mismatch} and~\ref{prop:perceptual_optimality} together establish that generative semantic decoding should be viewed as a \emph{posterior sampling problem} rather than an optimization problem.  Specifically, rather than sampling from the unconditional distribution $p(\bm{x})$ as in \eqref{eq:reverse_sde}, we can replace the prior score $\nabla_{\bm{x}_t}\log p_t(\bm{x}_t)$ with the posterior score $\nabla_{\bm{x}_t}\log p_t(\bm{x}_t|\hat{\bm{z}})$, thereby guiding the reverse diffusion process with the received signal $\hat{\bm{z}}$, given by
\begin{equation}
\di \bm{x}_t =
\left(-\frac{1}{2}\beta(t)\bm{x}_t - \beta(t)\nabla_{\bm{x}_t}\log p_t(\bm{x}_t|\hat{\bm{z}})\right)\di t
 + \sqrt{\beta(t)}\di \bar{\bm{w}}_t .
\label{eq:dps_reverse_sde}
\end{equation}
\textcolor{black}{Using} Bayes' rule, we can further expand the posterior score as
\begin{equation}
\nabla_{\bm{x}_t}\log p_t(\bm{x}_t|\hat{\bm{z}})
= \nabla_{\bm{x}_t}\log p_t(\bm{x}_t) + \nabla_{\bm{x}_t}\log p_t(\hat{\bm{z}}|\bm{x}_t),
\label{eq:dps_posterior_score}
\end{equation}
where the first term is the prior score and the second term is the likelihood score. While the prior score is available from the pretrained diffusion model, the likelihood score $\nabla_{\bm{x}_t}\log p_t(\hat{\bm{z}}|\bm{x}_t)$ is intractable due to the absence of an explicit dependency between $\hat{\bm{z}}$ and the noisy intermediate $\bm{x}_t$. In other words, we only have an explicit likelihood model $p(\hat{\bm{z}}|\bm{x})$ defined on the clean image $\bm{x}$.
To see this, we can further expand the likelihood term as
\begin{equation}
  \begin{aligned}
    p_t(\hat{\bm{z}}|\bm{x}_t) = \int p(\hat{\bm{z}}|\bm{x}_0) p(\bm{x}_0|\bm{x}_t)\mathrm{d}\bm{x}_0, 
  \end{aligned}
\end{equation}
where the posterior $p(\bm{x}_0|\bm{x}_t)$ has no closed-form expression. Following \cite{chung2022diffusion,DiffCom}, we approximate the above expectation by evaluating the likelihood at the posterior mean of $\bm{x}_0$ given $\bm{x}_t$, which \textcolor{black}{is} given by
\begin{equation}
  p_t(\hat{\bm{z}}|\bm{x}_t)
  \approx
  p(\hat{\bm{z}}|\hat{\bm{x}}_{0,t}),
\end{equation}
where
\begin{equation}
\hat{\bm{x}}_{0,t} \triangleq \mathbb{E}[\bm{x}_0|\bm{x}_t]
\simeq \frac{1}{\sqrt{\bar{\alpha}_t}}\bigg(\bm{x}_t + (1-\bar{\alpha}_t)\,\bm{s}_\omega(\bm{x}_t,t)\bigg),
\label{eq:tweedie_estimate}
\end{equation}
denotes the posterior mean estimator, yielding the approximate gradient
\begin{equation}
\nabla_{\bm{x}_t}\log p_t(\hat{\bm{z}}|\bm{x}_t)\ \approx\ \nabla_{\bm{x}_t}\log p(\hat{\bm{z}}|\hat{\bm{x}}_{0,t}).
\label{eq:dps_likelihood_approx}
\end{equation}
Substituting the Gaussian likelihood \eqref{eq:likelihood} and the decomposition \eqref{eq:dps_posterior_score} into the reverse SDE, the overall posterior score is approximated as
\begin{equation}
  \nabla_{\bm{x}_t}\log p_t(\bm{x}_t|\hat{\bm{z}})
\simeq
\bm{s}_\omega(\bm{x}_t,t) -\rho_{Z}\, \nabla_{\bm{x}_t} \|\hat{\bm{z}}-\mathcal{E}_\theta(\hat{\bm{x}}_{0,t})\|_2^2,
\label{eq:z_guidance_gradient}
\end{equation}
where $\rho_{Z} > 0$ is a guidance strength parameter. By iterating the reverse SDE \eqref{eq:dps_reverse_sde} with this approximation, we can draw samples from the posterior $p(\bm{x} | \hat{\bm{z}})$ and obtain the reconstructed image.
\subsection{Dual-Domain Consistency Formulation}
Based on the posterior sampling formulation above, we now discuss why using only latent consistency is insufficient, especially under severe bandwidth compression and low SNR. Since $\hat{\bm{z}} = \bm{z} + \bm{n}_{\mathrm{ch}}$, the Z-domain gradient in \eqref{eq:z_guidance_gradient} contains a channel-noise term that becomes dominant at low SNR, degrading semantic consistency. This is also empirically observed in \cite{DiffCom}: lowering the guidance strength yields unfaithful reconstructions, whereas increasing it amplifies channel noise and introduces artifacts.

To overcome this limitation, \textcolor{black}{we introduce} an additional guidance term by using the paired semantic decoder $\mathcal{D}_\phi$. Specifically, \textcolor{black}{we regard} the decoder output $\tilde{\bm{x}}$ as a noisy measurement of the original image $\bm{x}$ to enforce \textcolor{black}{image-domain} consistency. In this case, \textcolor{black}{we express} the X-domain likelihood as
\begin{equation}
  p(\tilde{\bm{x}} | \bm{x}) \propto \exp\left( -\frac{\| \tilde{\bm{x}} - \mathcal{D}_\phi(\mathcal{E}_\theta(\bm{x})) \|_2^2}{2\sigma_{\text{X}}^2} \right),
  \label{eq:x_likelihood}
\end{equation}
where $\sigma_{\text{X}}^2$ captures the effective noise in the image domain. Then, similar to the Z-domain case, we can still use DPS to approximate the posterior score for X-domain guidance, given by
\begin{equation}
  \nabla_{\bm{x}_t}\log p_t(\bm{x}_t|\tilde{\bm{x}}) \simeq
\bm{s}_\omega(\bm{x}_t,t) -\rho_{X}\nabla_{\bm{x}_t}\|\tilde{\bm{x}}-\mathcal{D}_\phi(\mathcal{E}_\theta(\hat{\bm{x}}_{0,t}))\|_2^2,
\end{equation}
where $\rho_{X} > 0$ is the guidance strength for X-domain consistency. Intuitively, a well-trained decoder $\mathcal{D}_\phi$ is robust to channel noise, so the effective noise in the X-domain gradient, $\mathcal{D}_\phi(\hat{\bm{z}})-\mathcal{D}_\phi(\bm{z})$, is much smaller than the raw channel noise in the Z-domain. Hence X-domain consistency is more robust than Z-domain consistency under low-SNR channel perturbations.
\begin{table*}[!t]
    \centering
    \vspace{0.10in}
    \setlength{\tabcolsep}{12pt}
    \renewcommand{\arraystretch}{1.2}
    \caption{Comparison of different methods at $\mathrm{SNR}=1$ dB.}
    \label{tab:snr_pos1_comparison}
      \begin{tabular}{lcccccc}
        \toprule
        Method & FID$\downarrow$ & PIEAPP$\downarrow$ & DISTS$\downarrow$ & LPIPS$\downarrow$ & PSNR (dB)$\uparrow$ & MS-SSIM$\uparrow$ \\
        \midrule
        DeepJSCC \cite{DeepJSCC}& 108.62 & 2.1246 & 0.2782 & 0.3532 & \textbf{24.08} & \textbf{0.8685} \\
        DeepJSCC-LPIPS & 59.16 & 1.9739 & \textbf{0.1546} & \textbf{0.1903} & 21.98 & 0.8323 \\
        \midrule
        InverseJSCC \cite{GAN_JSCC} & 67.70 & 1.3191 & 0.1764 & 0.2287 & 19.05 & 0.7314 \\
        HiFi-DiffCom ($T$=300)~\cite{DiffCom} & 60.08 & 1.3230 & 0.1697 & 0.2022 & 23.35 & \underline{0.8479} \\
        HiFi-DiffCom ($T$=1000)~\cite{DiffCom} & 59.54 & \underline{1.2977} & 0.1640 & 0.1978 & \underline{23.80} & 0.8411 \\
        \midrule
        \proposedrow
        Proposed ($T$=300) & \underline{58.68} & 1.3348 & 0.1681 & 0.2066 & 22.77 & 0.8345 \\
        \proposedrowB
        Proposed ($T$=1000) & \textbf{56.94} & \textbf{1.2930} & \underline{0.1600} & \underline{0.1950} & 22.99 & 0.8391 \\
        \bottomrule
      \end{tabular}
  \end{table*}

 \begin{algorithm}[t]
  \caption{Proposed ADDPS}
  \label{alg:addps}
  \SetKwInOut{Input}{Input}
  \SetKwInOut{Output}{Output}
  \Input{Received signal $\hat{\bm{z}}$, semantic encoder $\mathcal{E}_\theta$, semantic decoder $\mathcal{D}_\phi$, score network $\bm{s}_\omega$, total steps $T$, guidance strengths $\{\zeta_t\}$}
  \Output{Reconstructed image $\hat{\bm{x}}$}
  \BlankLine
  \textcolor{black}{$\tilde{\bm{x}} \leftarrow \mathcal{D}_\phi(\hat{\bm{z}})$}
  
  \textcolor{black}{$\bm{x}_T \sim \mathcal{N}\!\left(\sqrt{\bar{\alpha}_T}\tilde{\bm{x}},\ (1-\bar{\alpha}_T)\bm{I}\right)$} \\
  \For{$t = T, T-1, \ldots, 1$}{
    \textcolor{black}{Predict clean image via Tweedie: $\hat{\bm{x}}_0 \leftarrow \frac{1}{\sqrt{\bar{\alpha}_t}}(\bm{x}_t + (1-\bar{\alpha}_t)\bm{s}_\omega(\bm{x}_t, t))$} \\
    \textcolor{black}{$\bm{x}'_{t-1} \sim \mathcal{N}\!\left(\frac{1}{\sqrt{\alpha_t}}\!\left(\bm{x}_t + (1-\alpha_t)\bm{s}_\omega(\bm{x}_t,t)\right),\ \sigma_t^2\bm{I}\right)$} \\
    \eIf{$t \mod 2 = 0$}{
      \tcp{\textcolor{blue}{Z-Domain Guidance}}
      \textcolor{blue}{$\bm{g}_t \leftarrow \nabla_{\bm{x}_t} \left\| \hat{\bm{z}} - \mathcal{E}_\theta(\hat{\bm{x}}_0) \right\|_2^2$}
    }{
      \tcp{\textcolor{blue}{X-Domain Guidance}}
      \textcolor{blue}{$\bm{g}_t \leftarrow \nabla_{\bm{x}_t} \left\| \tilde{\bm{x}} - \mathcal{D}_\phi(\mathcal{E}_\theta(\hat{\bm{x}}_0)) \right\|_2^2$}
    }
    
    \textcolor{black}{$\bm{x}_{t-1} \leftarrow \bm{x}'_{t-1} - \zeta_t \cdot \bm{g}_t$}
  }
  \Return{$\hat{\bm{x}} = \bm{x}_0$}
\end{algorithm}

Despite the robustness of X-domain consistency, using it alone is insufficient. Specifically, sampling only from $p(\bm{x}| \tilde{\bm{x}})\propto p(\bm{x})p(\tilde{\bm{x}}|\bm{x})$ conditions on a decoder-generated statistic rather than the original channel observation. Since $\tilde{\bm{x}}=\mathcal{D}_{\phi}(\hat{\bm{z}})$ is a deterministic projection, it may discard part of the semantic evidence in $\hat{\bm{z}}$ and inherit decoder bias, i.e., different latent-consistent candidates can map to similar $\tilde{\bm{x}}$. 

To retain latent-domain fidelity while preserving image-domain robustness, we model reconstruction with the joint posterior distribution $p(\bm{x} | \hat{\bm{z}}, \tilde{\bm{x}})$, which enforces both constraints simultaneously and reduces the ambiguity of X-only conditioning. 

\subsection{Alternating Dual-Domain Posterior Sampling}
To sample from the joint posterior distribution, according to \cite{DiffCom}, a straightforward approach is to rewrite \eqref{eq:dps_posterior_score} as
\begin{equation}
  \begin{aligned}
    \label{eq:joint_posterior_bayes}
  & \nabla_{\bm{x}_t} \log p(\bm{x}_{t} | \hat{\bm{z}}, \tilde{\bm{x}}) \\
  &= \nabla_{\bm{x}_t} \log p(\bm{x}_{t}) + \nabla_{\bm{x}_t} \log p(\hat{\bm{z}}, \tilde{\bm{x}} | \bm{x}_{t}) \\
  & \approx \bm{s}_\omega(\bm{x}_t,t) + \nabla_{\bm{x}_t} \log p(\hat{\bm{z}}|\bm{x}_t) + \nabla_{\bm{x}_t} \log p(\tilde{\bm{x}}|\bm{x}_t)
  \end{aligned}
  \end{equation}
  and use DPS \cite{chung2022diffusion} to \textcolor{black}{perform sampling by replacing $\bm{x}_t$ with $\hat{\bm{x}}_{0,t}$}. However, directly applying simultaneous dual-domain guidance is not well justified. Specifically, \eqref{eq:joint_posterior_bayes} implicitly treats \(\hat{\bm{z}}\) and \(\tilde{\bm{x}}\) as conditionally independent given \(\bm{x}\), which is violated when \(\tilde{\bm{x}}=\mathcal{D}_{\phi}(\hat{\bm{z}})\) is a deterministic transform of \(\hat{\bm{z}}\). Therefore, directly multiplying \(p(\hat{\bm{z}}|\bm{x})\) and \(p(\tilde{\bm{x}}|\bm{x})\) repeatedly uses correlated evidence and tends to produce an \textcolor{black}{overconfident} pseudo-posterior. Moreover, both likelihood guidance terms are computed through the same Tweedie surrogate \(\hat{\bm{x}}_{0,t}\), so approximation bias in \(\hat{\bm{x}}_0\) is injected into both gradients simultaneously, which degrades the stability of the guidance direction.

% \subsubsection{Informed Initialization via Forward Diffusion}
% Unlike standard DPS methods that initialize from pure Gaussian noise $\bm{x}_T \sim \mathcal{N}(\bm{0}, \bm{I})$, we leverage the semantic decoder output $\tilde{\bm{x}} = \mathcal{D}_\phi(\hat{\bm{z}})$ to construct an informed starting point. As established in Remark~\ref{rem:noise_reduction}, the MSE-trained decoder produces a reconstruction whose per-pixel error $\sigma_{\mathrm{X}}^2$ is orders of magnitude smaller than the channel noise variance, so $\tilde{\bm{x}}$ already faithfully captures the semantic content of the transmitted image. Specifically, we initialize the reverse process at timestep $T$ by adding noise to $\tilde{\bm{x}}$ via forward diffusion:
% \begin{equation}
%   \bm{x}_T = \sqrt{\bar{\alpha}_T}  \tilde{\bm{x}} + \sqrt{1 - \bar{\alpha}_T}  \bm{\epsilon}, \quad \bm{\epsilon} \sim \mathcal{N}(\bm{0}, \bm{I}),
%   \label{eq:informed_init}
% \end{equation}
% where $\bar{\alpha}_T = \prod_{s=1}^{T} \alpha_s$ is the cumulative noise schedule coefficient. This informed initialization provides a substantially better starting point than random noise: the reverse diffusion process only needs to refine fine details and textures rather than reconstruct the global structure from scratch, effectively reducing the search space for posterior sampling.
% The remaining question is how to optimize this joint posterior during reverse diffusion. Directly combining the two guidance terms at every step may cause gradient conflict; therefore, we adopt an alternating guidance schedule.

To address this issue, we propose an alternating dual-domain guidance strategy, which is motivated by the alternating optimization framework.
As shown in Fig.~\ref{fig:addps}, rather than applying both guidance terms simultaneously, we schedule Z-domain and X-domain guidance at different diffusion steps, given by
\begin{equation}
  \nabla_{\bm{x}_t}\log p_t^{\mathrm{alt}}(\bm{x}_t|\hat{\bm{z}},\tilde{\bm{x}})
  \approx
  \begin{cases}
    \bm{s}_\omega(\bm{x}_t,t)-\zeta_t\bm{g}_{Z,t}, & t \bmod 2 = 0,\\
    \bm{s}_\omega(\bm{x}_t,t)-\zeta_t\bm{g}_{X,t}, & t \bmod 2 = 1,
  \end{cases}
  \label{eq:alt_guidance}
\end{equation}
where
\begin{equation}
  \begin{aligned}
    \bm{g}_{Z,t}\triangleq\nabla_{\bm{x}_t}\left\|\hat{\bm{z}}-\mathcal{E}_\theta\!\left(\hat{\bm{x}}_{0,t}\right)\right\|_2^2,\\
    \bm{g}_{X,t}\triangleq\nabla_{\bm{x}_t}\left\|\tilde{\bm{x}}-\mathcal{D}_\phi\!\left(\mathcal{E}_\theta\!\left(\hat{\bm{x}}_{0,t}\right)\right)\right\|_2^2.   
  \end{aligned}
\end{equation}
\textcolor{black}{Here, $\zeta_t>0$ denotes the timestep-dependent guidance strength used in the alternating sampling process; it plays the role of the domain-specific coefficients $\rho_Z$ and $\rho_X$ when the current step applies Z-domain and X-domain guidance, respectively.}

The alternating schedule offers several advantages: (1) it avoids gradient conflicts between the two consistency terms that may point in different directions; (2) it allows each domain to fully correct the intermediate estimate before the other domain refines it. We summarize the proposed ADDPS method in Algorithm~\ref{alg:addps}.

\begin{table*}[!t]
  \centering
  \vspace{0.10in}
  \small
  \setlength{\tabcolsep}{8pt}
  \renewcommand{\arraystretch}{0.95}
  \caption{Ablation Study of the Proposed Method}
  \label{tab:ablation_study}
    \begin{tabular}{lccccccc}
      \toprule
      \multirow{2}{*}{Domain} & \multirow{2}{*}{Gradient} & \multicolumn{6}{c}{Metrics} \\
      \cmidrule(lr){3-8}
      & & FID$\downarrow$ & PIEAPP$\downarrow$ & DISTS$\downarrow$ & LPIPS$\downarrow$ & PSNR (dB)$\uparrow$ & MS-SSIM$\uparrow$ \\
      \midrule
      X & $\bm{g}_{X,t}=\nabla_{\bm{x}_t}\|\tilde{\bm{x}}-\mathcal{D}_\phi(\mathcal{E}_\theta(\hat{\bm{x}}_{0,t}))\|_2^2$ & \underline{58.99} & 1.4235 & 0.1779 & 0.2297 & \underline{22.06} & 0.8027 \\
      Z & $\bm{g}_{Z,t}=\nabla_{\bm{x}_t}\|\hat{\bm{z}}-\mathcal{E}_\theta(\hat{\bm{x}}_{0,t})\|_2^2$ & 87.03 & 1.9145 & 0.2931 & 0.4698 & 10.47 & 0.4698 \\
      X+Z & $\bm{g}_t=\bm{g}_{Z,t}+\bm{g}_{X,t}$ & 61.64 & \underline{1.3295} & \underline{0.1725} & \underline{0.2216} & \textbf{22.13} & \textbf{0.8081} \\
      \midrule
      \proposedrowB
      Proposed & $\bm{g}_t=\left\{\begin{aligned}\bm{g}_{Z,t},\ &t\bmod2=0,\\ \bm{g}_{X,t},\ &t\bmod2=1,\end{aligned}\right.$ & \textbf{58.38} & \textbf{1.2976} & \textbf{0.1689} & \textbf{0.2199} & 22.05 & \underline{0.8044} \\
      \bottomrule
  \end{tabular}
\end{table*}

\section{Simulation Results}
\label{Sec:simulations}
In this section, we evaluate the proposed ADDPS method. We first describe the simulation setup, and then report quantitative comparisons, followed by an ablation study.

\subsection{Simulation \textcolor{black}{Settings}}

Simulations are conducted on the FFHQ dataset with \textcolor{black}{the} image resolution fixed to $256 \times 256$. Following \cite{GAN_JSCC}, we adopt the same semantic encoder-decoder backbone to ensure architectural consistency with prior SemCom studies. The bandwidth compression ratio is fixed to $1/192$. For the generative receiver, we use \textcolor{black}{a diffusion} model trained on FFHQ dataset. We consider an AWGN channel and evaluate low-SNR conditions with $\mathrm{SNR}$ \textcolor{black}{set to} 1 dB. We compare ADDPS with four baselines: \textit{DeepJSCC} \cite{DeepJSCC,GAN_JSCC} trained with MSE loss, \textit{DeepJSCC-LPIPS} trained with the LPIPS perceptual loss \cite{LPIPS}, \textit{InverseJSCC} \cite{GAN_JSCC} using GAN inversion at the receiver, and \textit{HiFi-DiffCom} \cite{yilmaz2023high}, the state-of-the-art diffusion-based generative SemCom \textcolor{black}{method} and the most relevant prior work.
For evaluation, we adopt PSNR and MS-SSIM for distortion fidelity, and FID, PIEAPP, DISTS, and LPIPS for perceptual quality.

\subsection{Performance Comparison}
Table~\ref{tab:snr_pos1_comparison} compares the performance of different methods at $\mathrm{SNR}=1$ dB. ADDPS with $T=1000$ achieves the best perceptual quality, with the lowest FID of $56.94$ and PIEAPP of $1.293$, and the second-best DISTS and LPIPS. Compared with \textcolor{black}{the most relevant baseline, HiFi-DiffCom}, ADDPS shows a clear improvement in distribution-level realism; even with $T=300$, ADDPS still surpasses HiFi-DiffCom with $T=1000$. Although DeepJSCC \textcolor{black}{achieves} the best PSNR/MS-SSIM, its FID is nearly twice that of ADDPS, indicating severe over-smoothing. DeepJSCC-LPIPS improves perceptual scores but requires retraining the encoder-decoder with a perceptual loss, whereas ADDPS operates as a plug-in receiver using a pretrained diffusion prior. Overall, this result demonstrates the effectiveness of the proposed ADDPS method in improving the perceptual quality of SemCom under low-SNR conditions.

To further validate the effectiveness of the proposed ADDPS method, we conduct an ablation study to analyze the contribution of each guidance component, where the channel SNR is set to $-1$ dB and the number of diffusion steps is set to $T=1000$. Specifically, we compare four guidance variants: (i) X-domain guidance only, which enforces consistency solely in the image domain; (ii) Z-domain guidance only, which enforces consistency solely in the latent domain; (iii) simultaneous X+Z guidance, which applies both guidance terms at every diffusion step; and (iv) the proposed ADDPS, which schedules X-domain and Z-domain guidance at different steps. \textcolor{black}{Table~\ref{tab:ablation_study} shows} that the Z-only variant performs worst, confirming that Z-domain guidance is highly vulnerable to channel noise. The X-only variant is much more robust, \textcolor{black}{demonstrating} the robustness of image-domain consistency. The simultaneous X+Z variant attains the best distortion fidelity, but its FID is even worse than \textcolor{black}{that of} the X-only variant, indicating that simultaneous guidance degrades perceptual quality. In contrast, ADDPS achieves the best perceptual quality across all four metrics with only a marginal PSNR drop. These results further validate the effectiveness of the proposed ADDPS method.

\section{Conclusion}
In this paper, we have proposed a novel diffusion-based SemCom \textcolor{black}{framework} that formulates \textcolor{black}{semantic decoding} as a Bayesian inverse problem with dual-domain consistency constraints. Specifically, we first analyze why existing methods that solve \textcolor{black}{MAP estimation} fail to achieve high perceptual quality under low-SNR conditions, and explore the potential of diffusion-based posterior sampling to address this issue. We then propose \textcolor{black}{ADDPS}, which alternately enforces consistency in both the latent domain and \textcolor{black}{the} image domain during the reverse diffusion process, effectively leveraging the generative prior to produce perceptually realistic reconstructions. Extensive experiments \textcolor{black}{demonstrate} that ADDPS \textcolor{black}{outperforms} existing methods in terms of perceptual quality while maintaining competitive distortion fidelity.
{
\bibliographystyle{IEEEtran}
\bibliography{IEEEabrv,references}}

\end{document}